\title{Pretrained Cost Model for Distributed Constraint Optimization Problems}
\author {
    Yanchen Deng,
    Shufeng Kong\footnote{Corresponding Author},
    Bo An
}
\begin{document}

\maketitle

\begin{abstract}
Distributed Constraint Optimization Problems (DCOPs) are an important subclass of combinatorial optimization problems, where information and controls are distributed among multiple autonomous agents. 
Previously, Machine Learning (ML) has been largely applied to solve combinatorial optimization problems by learning effective heuristics. However, existing ML-based heuristic methods are often not generalizable to different search algorithms.
Most importantly, these methods usually require full knowledge about the problems to be solved, which are not suitable for distributed settings where centralization is not realistic due to geographical limitations or privacy concerns. To address the generality issue, we propose a novel directed acyclic graph representation schema for DCOPs and leverage the Graph Attention Networks (GATs) to embed graph representations. Our model, GAT-PCM, is then pretrained with optimally labelled data in an offline manner, so as to construct effective heuristics to boost a broad range of DCOP algorithms where evaluating the quality of a partial assignment is critical, such as local search or backtracking search. Furthermore, to enable decentralized model inference, we propose a distributed embedding schema of GAT-PCM where each agent exchanges only embedded vectors, and show its soundness and complexity.
Finally, we demonstrate the effectiveness of our model by combining it with a local search or a backtracking search algorithm. Extensive empirical evaluations indicate that the GAT-PCM-boosted algorithms significantly outperform the state-of-the-art methods in various benchmarks. Our pretrained cost model is available at https://github.com/dyc941126/GAT-PCM.

\end{abstract}

\section{Introduction}
As a fundamental formalism in multi-agent systems, Distributed Constraint Optimization Problems (DCOPs) \cite{modi2005adopt} capture the essentials of cooperative distributed problem solving and have been successfully applied to model the problems in many real-world domains like radio channel allocation \cite{monteiro2012multi}, vessel navigation \cite{hirayama2019dssa+}, and smart grid \cite{fioretto2017distributed}.

Over the past two decades, numerous algorithms have been proposed to solve DCOPs and can be generally classified as complete and incomplete algorithms. Complete algorithms aim to exhaust the search space and find the optimal solution by either distributed backtracking search \cite{hirayamay97,modi2005adopt,litov2017forward,yeoh2010bnb} or dynamic-programming \cite{chen2020rmb,petcuscalable,petcu2007mb}. However, complete algorithms scale poorly and are unsuitable for large real-world applications. Therefore, considerable research efforts have been devoted to develop incomplete algorithms that trade the solution quality for smaller computational overheads, including local search \cite{maheswaran2004distributed,okamoto2016distributed,Zhang2005Distributed}, belief propagation \cite{cohen2020governing,farinelli2008decentralised,rogers2011bounded,zivan2017balancing,chenDWH18} and sampling \cite{nguyen2019distributed,ottens2017duct}.

However, the existing DCOP algorithms usually rely on handcrafted heuristics which need expertise to tune for different settings. In contrast, Machine Learning (ML) based techniques learn effective heuristics for existing methods automatically \cite{bengio2021machine,gasseCFC019,Lederman2020Learning}, achieving state-of-the-art performance in various challenging problems like Mixed Integer Programming (MIP), Capacitated Vehicle Routing Problems (CVRPs), and Boolean Satisfiability Problems (SATs). Unfortunately, these methods are often not generalizable to different search algorithms. Most importantly, many of these methods usually require the full knowledge about the problems to be solved, making them unsuitable for a distributed setting where centralization is not realistic due to geographical limitations or privacy concerns.

Therefore, we develop the first general-purpose ML model, named GAT-PCM, to generate effective heuristics for a wide range of DCOP algorithms and propose a distributed embedding schema of GAT-PCM for decentralized model inference. Specifically, we make the following key contributions: (1) We propose a novel directed tripartite graph representation based on microstructure \cite{jegou1993decomposition} to encode a partially instantiated DCOP instance and use Graph Attention Networks (GATs) \cite{vaswani2017attention} to learn generalizable embeddings. (2) Instead of generating heuristics for a particular algorithm, GAT-PCM predicts the optimal cost of a target assignment given a partial assignment, such that it can be applied to boost the performance of a wide range of DCOP algorithms where evaluating the quality of an assignment is critical.
To this end, we pretrain our model on a dataset where DCOP instances are sampled from a problem distribution, partial assignments are constructed according to pseudo trees, and cost labels are generated by a complete algorithm. 
(3) We propose a Distributed Embedding Schema (DES) to perform decentralized model inference without disclosing local constraints, where each agent exchanges only the embedded vectors via localized communication. We also theoretically show the correctness and complexity of DES.
(4) As a case study, we develop two efficient heuristics for DLNS \cite{hoang2018large} and backtracking search for DCOPs based on GAT-PCM, respectively. Specifically, by greedily constructing a solfution, our GAT-PCM can serve as a subroutine of DLNS to repair assignments. Besides, the predicted cost of each assignment is used as a criterion for domain ordering in backtracking search. (5) Extensive empirical evaluations indicate that GAT-PCM-boosted algorithms significantly outperform the state-of-the-art methods in various standard benchmarks.

\section{Related Work}
There is an increasing interest of applying neural networks to solve SAT problems in recent years. Selsam et al. (\citeyear{selsamLBLMD19}) proposed NeuroSAT, a message passing neural network built upon LSTMs \cite{hochreiter1997long} to predict the satisfiability of a SAT and further decode the satisfying assignments. Yolcu and P{\'o}czos (\citeyear{yolcu2019learning}) proposed to use Graph Neural Networks (GNNs) to encode a SAT and REINFORCE \cite{williams1992simple} to learn local search heuristics. Similarly, Kurin et al. (\citeyear{kurin2020can}) proposed to learn branching heuristics for a CDCL solver \cite{een2003extensible} using GNNs and DQN \cite{mnih2015human}. Beside boolean formulas, Xu et al. (\citeyear{xu2018towards}) proposed to use CNNs \cite{cunBDHHHJ89} to predict the satisfiability of a general Constraint Satisfaction Problem (CSP). However, all of these methods require the total knowledge of a problem, making them unsuitable for distributed settings. Differently, our method uses an efficient distributed embedding schema to cooperatively compute the embeddings without disclosing constraints.

Very recently, there are two concurrent work \cite{yanchen21,razeghiKLBAD21} which uses Multi-layer Perceptrons (MLPs) to parameterize the high-dimensional data in traditional constraint reasoning techniques, e.g., Bucket Elimination \cite{dechter98}. Unfortunately, they follow an online learning strategy, which removes the most attractive feature of generalizing to new instances offered by neural networks. As a result, they require a significantly long runtime in order to train the MLPs. In contrast, we aim to develop an ML model for DCOPs which is supervisely pretrained with large-scale datasets beforehand. When applying the model to an instance, we just need several steps of model inference, which substantially reduces the overall overheads.

\section{Backgrounds}
In this section, we present preliminaries including DCOPs, GATs and pretrained models.
\paragraph{Distributed Constraint Optimization Problems} A Distributed Constraint Optimization Problem (DCOP) \cite{modi2005adopt} can be defined by a tuple $\langle I,X,D,F\rangle$ where $I=\{1,\dots,|I|\}$ is the set of agents, $X=\{x_1,\dots,x_{|X|}\}$ is the set of variables, $D=\{D_1,\dots,D_{|X|}\}$ is the set of discrete domains and $F=\{f_1,\dots,f_{|F|}\}$ is the set of constraint functions. Each variable $x_i$ takes a value from its domain $D_i$ and each function $f_i:D_{i_1}\times\cdots \times D_{i_k}\rightarrow \mathbb{R}_{\ge 0}$ defines the cost for each possible combination of $D_{i_1},\dots,D_{i_k}$.
Finally, the objective is to find a joint assignment $X\in D_1\times\dots\times D_{|X|}$ such that the following total cost is minimized:
\begin{equation}
\mathop{\min}\nolimits_X\sum\nolimits_{f_i\in F}f_i(X).
\end{equation}
For the sake of simplicity, we follow the common assumptions that each agent only controls a variable (i.e., $|I|=|X|$) and all constraints are binary (i.e., $f_{ij}: D_i\times D_j\rightarrow\mathbb{R}_{\ge 0}, \forall f_{ij}\in F$). Therefore, the term ``agent'' and ``variable'' can be used interchangeably and a DCOP can be visualized by a constraint graph in which vertices and edges represent the variables and constraints of the DCOP, respectively.

\paragraph{Graph Attention Networks}
Graph attention networks (GATs) \cite{velivckovic2017graph} are constructed by stacking a number of graph attention layers in which nodes are able to attend over their neighborhoods' features via the self-attention mechanism. 
Specifically, the attention coefficient between every pair of neighbor nodes is computed as $e_{ij}=a(\mathbf{W}h_i,\mathbf{W}h_j),$ where $h_i,h_j\in \mathbb{R}^d$ are node features, $\mathbf{W}\in \mathbb{R}^{d\times d}$ is a weight matrix, and $a$ is single-layer feed-forward neural network. Then the attention weight $\alpha_{ij}$ for nodes $j\in \mathcal{N}_i$ is computed as $\alpha_{ij}=\frac{\text{exp}(e_{ij})}{\sum_{k\in \mathcal{N}_i}\text{exp}(e_{ik})},$
where $\mathcal{N}_i$ is the neighborhood of node $v_i$ in the graph (including $v_i$).
At last, node $v_i$'s feature $h_i'$ is updated as $h_i'=g(\sum_{j\in\mathcal{N}_i}\alpha_{ij}\mathbf{W}h_j)$,
where $g$ is some nonlinear function such as the sigmoid. Multi-head attention \cite{vaswani2017attention} is also used where $K$ independent attention mechanisms are executed and their feature vectors are averaged as $h_i'=g(\frac{1}{K}\sum_{k=1}^{K}\sum_{j\in\mathcal{N}_i}\alpha_{ij}^k\mathbf{W}^k h_j).$
\begin{figure*}
    \centering
    \includegraphics[width=0.95\linewidth]{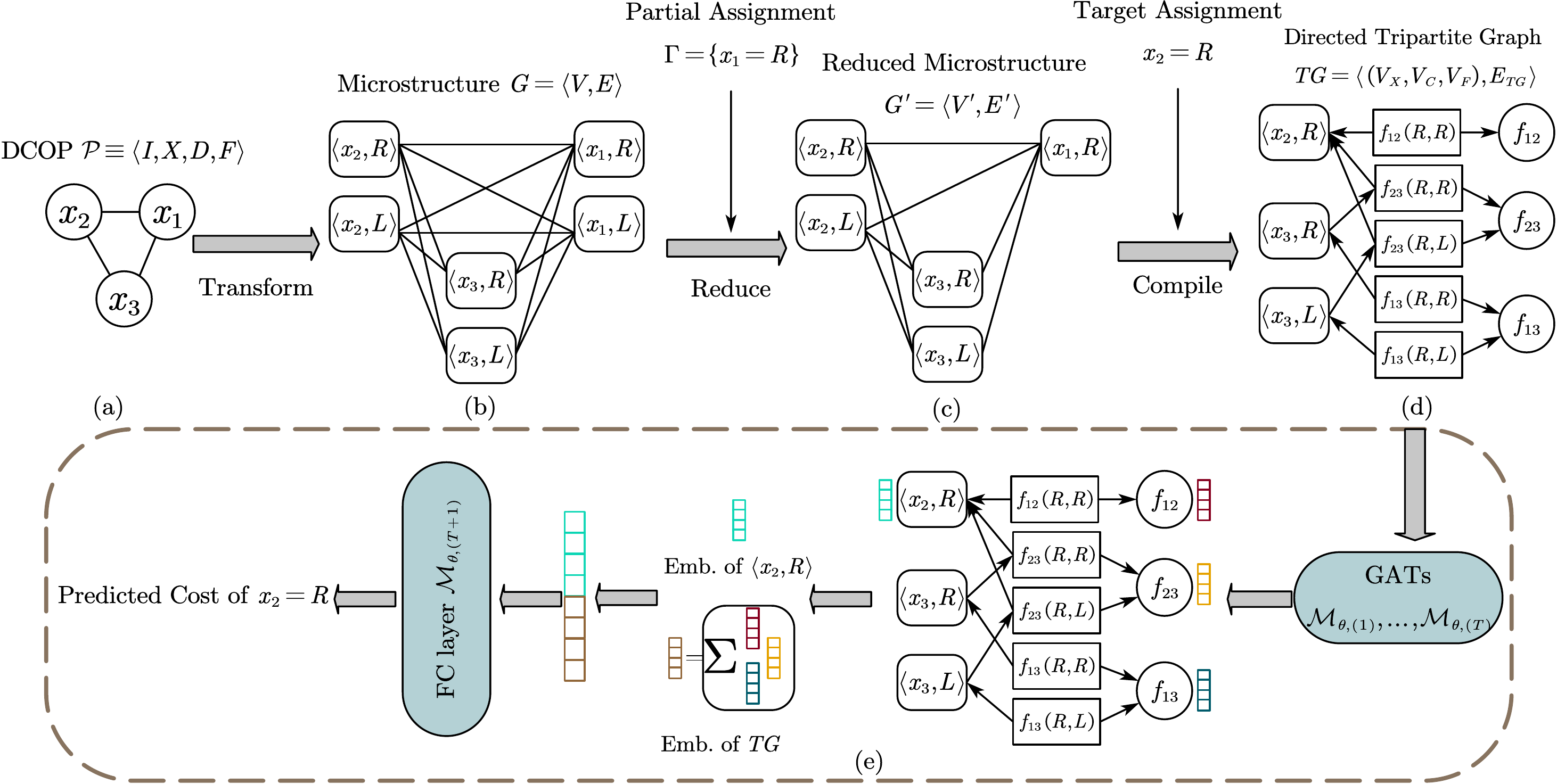}
    \caption{An illustration of the architecture of GAT-PCM with a small DCOP instance. A DCOP instance in (a) is first transformed into an equivalent microstructure $G$ in (b), and then $G$ is instantiated with a partial assignment $\Gamma$ by removing some nodes and edges in (c) and then further compiled to a directed tripartite graph with a given target assignment in (d) (cf. the section ``Graph Representations'' for details) in (e). Finally, we use GATs to learn an embedding with supervised training (cf. the sections ``Graph Embeddings" and ``Pretraining" for details).}
    \label{fig:arch}
\end{figure*}
\paragraph{Pretrained Models} The idea behind pretrained models is to first pretrain the models using large-scale datasets beforehand, then apply the models in downstream tasks to achieve state-of-the-art results. Beside significantly reducing the training overhead, pretrained models also offer substantial performance improvement over learning from scratch, leading to great successes in natural language processing \cite{brown2020language,devlin2018bert} and computer vision \cite{he2016deep,krizhevsky2017imagenet,simonyan2014very}.

In this work, we aim to develop the first effective and general-purpose pretrained model for DCOPs. In particular, we are interested in training a cost model $\mathcal{M}_\theta$ to predict the optimal cost of a partially instantiated DCOP instance, which is a core task in many DCOP algorithms:
\begin{equation}
\mathcal{M}_\theta(P,x_i=d_i;\Gamma)\mapsto \mathbb{R}, \label{eq-pcm-obj}  
\end{equation}
where $P\equiv\langle I,X,D,F\rangle$ is a DCOP instance, $\Gamma$ is a partial assignment, $x_i=d_i$ is the target assignment, and variable $x_i$ does not appear in $\Gamma$. This way, our cost model can be applied to a wide range of DCOP algorithms where evaluating the quality of an assignment is critical.

\section{Pretrained Cost Model for DCOPs}
In this section, we elaborate our pretrained cost model GAT-PCM. We begin with illustrating the architecture of the model in Fig.~\ref{fig:arch}. We then outline the centralized pretraining procedure for the model in Algo.~\ref{algo:train_c} to learn generalizable cost heuristics. We further propose a distributed embedding schema for decentralized model inference in Algo.~\ref{algo:train_d}.
Finally, we show how to use GAT-PCM to construct effective heuristics to boost DCOP algorithms.

The architecture of our GAT-PCM is illustrated in Fig.~\ref{fig:arch}. 
Recall that we aim to train a model to predict the optimal cost of a partially instantiated DCOP instance (cf. Eq.~(\ref{eq-pcm-obj})) and thus, we first need to embed a partially instantiated DCOP instance and the key is to build a suitable graph representation for a partially instantiated DCOP instance.
\subsubsection{Graph Representations}
Since DCOPs can be naturally represented by graphs with arbitrary sizes, we resort to GATs to learn generalizable and permutation-invariant representation of DCOPs. 
To this end, we first transform a DCOP instance $P\equiv\langle I,X,D,F\rangle$ to a microstructure representation \cite{jegou1993decomposition} where each variable assignment corresponds to a vertex and 
the constraint cost between a pair of vertices is represented by a weighted edge (cf. Fig.~\ref{fig:arch}(b)). After that, for each assignment $x_i= d_i$ in the partial assignment $\Gamma$, we remove all the other variable-assignment vertices of $x_i$, except $\langle x_i, d_i\rangle$, and their related edges from the microstructure (cf. Fig.~\ref{fig:arch}(c)). Then the reduced microstructure represents the partially instantiated DCOP instance w.r.t. $\Gamma$.

The reduced microstructure is further compiled into a directed tripartite graph $TG=\langle (V_X, V_C,V_F),E_{TG}\rangle$ which serves as the input of our GAT-PCM model (cf. Fig.~\ref{fig:arch}(d)). Specifically, for each edge in the microstructure, we insert a constraint-cost node $v_c\in V_C$ which corresponds to the constraint cost between the related pair of variable assignments. For each constraint function $f\in F$, we also create a function node $v_f \in V_F$ in the graph, and each related constraint-cost node will be directed to $v_f$. Note that $v_f$ can be regarded as the \emph{proxy} of all related constraint-cost nodes.
Besides, variable-assignment nodes related to $\Gamma$ will also be removed from the tripartite graph since they are \emph{subsumed} by their related constraint-cost nodes.

Finally, we note that the loopy nature of undirected microstructure may lead to missense propagation and potentially cause an oversmoothing problem \cite{li2019deepgcns}. For example, $\langle x_3, R\rangle$ should be independent of $\langle x_3, L\rangle$ since they are two different assignments of the same variable. However, $\langle x_3, L\rangle$ could indirectly influence $\langle x_3, R\rangle$ through multiple paths (e.g., $\langle x_3, L\rangle-\langle x_2, R\rangle-\langle x_3, R\rangle$) when applying GATs. Therefore, we require the tripartite graph to be directed and acyclic such that each constraint-cost node or variable-assignment node has a path to the target variable-assignment node. Specifically, we determine the directions between constraint-cost nodes and variable-assignment nodes through a two-phase procedure. First, we build a Directed Acyclic Graph (DAG) for a constraint graph induced by the set of unassigned variables such that every unassigned variable has a path to the target variable. To this end, we build a pseudo tree $PT$ \cite{freuder1985taking} with the target variable as its root and use $PT$ as the DAG where each node of $PT$ will be directed to its parent or pseudo parents. Second, for any pair of constrained variables $x_i$ and $x_j$ in the DAG, where $x_i$ is the precursor of $x_j$, and any related pair of variable assignments $\langle x_i,d_i\rangle$ and $\langle x_j,d_j\rangle$, we set the node of $\langle x_i,d_i\rangle$ to be the precursor of the constraint-cost node of $f_{ij}(d_i,d_j)$ and set the constraint-cost node of $f_{ij}(d_i,d_j)$ to be the precursor of the node of $\langle x_j,d_j\rangle$ in the tripartite graph.
Note that constraint-cost nodes related to a unary function will be set to be the precursor of their corresponding variable-assignment nodes.

For space complexity, given an instance with $|I|$ variables and maximum domain size of $d$, the directed acyclic tripartite graph has $O(d|I|)$ variable-assignment nodes, $O(|I|^2)$ function nodes and $O(d^2|I|^2)$ constraint-cost nodes.

\subsubsection{Graph Embeddings}
Given a directed tripartite graph representation, we use GATs to learn an embedding with supervised training (cf. Fig.~\ref{fig:arch}(e)). Each node $v_i$ has a four-dimensional initial feature vector $h^{(0)}_i\in H^{(0)}$ where the first three elements denote the one-hot encoding of node types (i.e., variable-assignment nodes, constraint-cost nodes, and function nodes) and the last element is set to be the constraint cost of $v_i$ if it is a constraint-cost node and otherwise, $0$. The initial feature matrix $H^{(0)}$ is then embedded through $T$ layers of the GAT. Formally, 
\begin{equation}
    H^{(t)} = \mathcal{M}_{\theta,(t)}(H^{(t-1)}),\quad t=1,\dots,T,\label{eq-embed}
\end{equation}
where $H^{(t)}$ is the embedding in the $t$-th timestep and $\mathcal{M}_{\theta,(t)}$ is the $t$-th layer of the GAT. Finally, given a target variable-assignment $x_m=d_m$ and a partial assignment $\Gamma$, we predict the optimal cost of the partially instantiated problem instance induced by $\Gamma\cup\{ x_m=d_m\}$ based on the embedding of the node of $v_m=\langle x_m, d_m\rangle\in V_X$ and the accumulated embedding of all function nodes of the tripartite graph as follows:
\begin{equation}
    \hat{c}_m=\mathcal{M}_{\theta, (T+1)}(h_m^{(T)}\oplus \sum_{v_i\in V_F}h_i^{(T)}), \label{eq-mlp}
\end{equation}
where $\mathcal{M}_{\theta, (T+1)}$ is a fully-connected layer and $\oplus$ is the concatenation operation.

Note that, by our construction of the tripartite graph, function nodes are the proxies of all constraint-cost nodes and all the other variable-assignment nodes have been directed to the target variable-assignment node. Therefore, we do not need to include the embeddings of constraint-cost nodes and variable-assignment nodes except the target variable-assignment node in Eq.~(\ref{eq-mlp}).

\paragraph{Pretraining}
\begin{algorithm}[t]
\small
\caption{Offline pretraining procedure}\label{algo:train_c}
\begin{algorithmic}[1]
\Require number of training epochs $N$, number of training iterations $K$, problem distribution $\mathcal{P}$, optimal DCOP algorithm $\mathcal{A}$, capacitated FIFO buffer $\mathcal{B}$
\For{$n=1,\dots, N$}
\Statex \textbf{Phase I: generating labelled data}
\State $P\equiv\langle I,X,D,F\rangle\sim\mathcal{P}$, $PT\gets$ build a pseudo tree for $P$
\ForAll{$x_i\in X$}
\State $Sep(x_i)\gets $ anc. connecting $x_i$ and its desc. in $PT$
\ForAll{context $\Gamma_i\in \Pi_{x_j\in Sep(x_i)}D_j$}
\ForAll{$d_i\in D_i$}
\State $P^\prime\gets\textsc{Reduce}(P,\Gamma_i,x_i=d_i)$
\State $c^*\gets \mathcal{A}(P^\prime)$, $\mathcal{B}\gets\mathcal{B}\cup\{\langle P,\Gamma_i,x_i=d_i,c^*\rangle\}$
\EndFor
\EndFor
\EndFor
\Statex \textbf{Phase II: training the model}
\For{$k=1,\dots,K$}
\State $B\gets\text{ sample a batch of data from }\mathcal{B}$
\State train the model $\mathcal{M}_\theta$ to minimize Eq.~(\ref{eq-loss})
\EndFor
\EndFor
\Return $\mathcal{M}_\theta$
\end{algorithmic}
\end{algorithm}
Algorithm 1 sketches the training procedure. For each epoch, we first generate labelled data (i.e., partial assignments, target assignments and corresponding optimal costs) in phase I and then train our model in phase II.

Specifically, we first sample a DCOP instance $P$ from the problem distribution $\mathcal{P}$. For each target variable $x_i$, instead of randomly generating partial assignments, we build a pseudo tree $PT$ and use its contexts w.r.t. $PT$ as partial assignments (line 2-5). In this way, we avoid redundant partial assignments by focusing only on the variables that are constrained with $x_i$ or its descendants. After obtaining the subproblem rooted at $x_i$ (cf. procedure \textsc{Reduce}), we apply any off-the-shelf optimal DCOP algorithm $\mathcal{A}$ to solve $P^\prime$ to get the optimal cost $c^*$ (line 6-8).

Each tuple of partial assignment, target assignment, optimal cost and problem instance will be stored in a capacitated FIFO buffer $\mathcal{B}$. 
In phase II, we uniformly sample a batch $B$ of data from the buffer to train our model using the mean squared error loss:
\begin{equation}
\resizebox{.9\hsize}{!}{
    $\mathcal{L}(\theta)=\frac{1}{|B|}\sum_{\langle P,\Gamma,x_i=d_i,c^*\rangle\in B}(\mathcal{M}_\theta(P,x_i=d_i;\Gamma)-c^*)^2.$
    }\label{eq-loss}
\end{equation}

\algrenewcommand\algorithmicprocedure{\textbf{When}}
\begin{algorithm}[!t]
\small
\caption{Distributed embedding schema for agent $i$}\label{algo:train_d}
\begin{algorithmic}[1]
\Require trained model $\mathcal{M}_\theta$, precursors $P_i$, successors $S_i$, target assignment $x_m=d_m$, initial variable-assignment node feature $h_X^{(0)}$, initial function node feature $h_F^{(0)}$, one-hot encoding for constraint-cost node $h_C^{(0)}$
\Procedure{Initialization}{}:
\State $Cache_i\gets []$, $H_i^{(0)}\gets $ empty tensor
\For{$t=1,\dots, T$}
$Cache_i[t]\gets $ empty map
\EndFor
\If{$x_i=x_m$}
$H_i^{(0)}\gets \textsc{Stack}(H_i^{(0)},h_X^{(0)})$
\Else
\ForAll{$d_i\in D_i$}
$H_i^{(0)}\gets \textsc{Stack}(H_i^{(0)},h_X^{(0)})$
\EndFor
\EndIf
\ForAll{$j\in S_i$}
\State $H_i^{(0)}\gets \textsc{Stack}(H_i^{(0)},h_F^{(0)})$
\ForAll{cost value $c_{ij}\in f_{ij}(\cdot,\cdot)$}
\State $H_i^{(0)}\gets \textsc{Stack}(H_i^{(0)},h_C^{(0)}\oplus c_{ij})$
\EndFor
\EndFor
\ForAll{$j\in P_i$}
\ForAll{cost value $c_{ji}\in f_{ji}(\cdot,\cdot)$}
\State $H_i^{(0)}\gets \textsc{Stack}(H_i^{(0)},h_C^{(0)}\oplus c_{ji})$
\EndFor
\EndFor
\State $\ell_i\gets $ zero vector, $t_i\gets 1$, $H_i^{(t_i)}\gets \mathcal{M}_{\theta, (t_i)}(H_i^{(t_i-1)})$
\State send $H^{(t_i)}_{i}[f_{ij}(\cdot,\cdot)]$ to $j,\; \forall j\in S_i$
\If{$P_i=\emptyset$}
\For{$t_i=2,\dots, T$}
\State $H^{(t_i)}\gets \mathcal{M}_{\theta, (t_i)}(H^{(t_i-1)})$
\If{$t_i<T$}
\State send $H^{(t_i)}_{i}[f_{ij}(\cdot,\cdot)]$ to $j,\; \forall j\in S_i$
\EndIf
\EndFor
\State $\ell_i\gets\sum_{j\in S_i}H^{(T)}_i[f_{ij}]$, send $\ell_i$ to $j^\prime\in S_i$
\EndIf
\EndProcedure
\Procedure{Receive}{} embedding $\bar{H}^{(t_j)}$ from $j\in P_i$:
\State $Cache_i[t_j][j]\gets \bar{H}^{(t_j)}$
\If{$|Cache_i[t_i]|=|P_i|$}

\ForAll{${j^\prime}\in P_i$} 
\State $H_{i}^{(t_i)}[f_{ij^\prime}(\cdot,\cdot)]\gets Cache[t_i][{j^\prime}]$ 
\EndFor
\State $t_i\gets t_i+1$
\State $H^{(t_i)}\gets \mathcal{M}_{\theta, (t_i)}(H^{(t_i-1)})$
\If{$t_i<T$}
\State send $H^{(t_i)}_{i}[f_{ij^\prime}(\cdot,\cdot)]$ to $j^\prime,\; \forall j^\prime\in S_i$
\Else
\State $\ell_i\gets\sum_{j^{\prime\prime}\in S_i}H^{(T)}_i[f_{ij^{\prime\prime}}]$, send $\ell_i$ to $j^\prime\in S_i$
\EndIf
\EndIf
\EndProcedure
\Procedure{Receive}{} accum. embedding $\ell_j$ from $j\in P_i$:
\If{$x_i\ne x_m$} 
\State send $\ell_j$ to ${j^\prime}\in S_i$
\Else 
\State $\ell_i\gets \textsc{Add}(\ell_i,\ell_j)$
\If{all accum. embeddings have arrived}
\State computes Eq.~(\ref{eq-dis-mlp})
\EndIf
\EndIf
\EndProcedure
\end{algorithmic}
\end{algorithm}

\paragraph{Distributed Embedding Schema}
Different from pretraining stage where the model has access to all the knowledge (e.g., variables, domains, constraints, etc.) about the instance to be solved, an agent in real-world scenarios usually can only be aware of its local problem due to privacy concern and/or geographical limitation, posing a significant challenge when applying our model to solve DCOPs. Also, centralized model inference could overwhelm a single agent. Therefore, we aim to develop a distributed schema for model inference in which each agent only uses its local knowledge to cooperatively compute Eq.~(\ref{eq-embed}) and Eq.~(\ref{eq-mlp}).


We
exploit the directed and acyclic nature of our tripartite graph and propose an efficient Distributed Embedding Schema (DES) in Algorithm~2. The general idea is that each agent maintains the embeddings w.r.t. its local problem. Specifically, an agent $i$ maintains the following components: (1) its own variable-assignment nodes and (induced) unary constraint-cost and function nodes; (2) all function nodes $f_{ij}$ where $x_j$ is a successor of $x_i$; and (3) all constraint-cost nodes $f_{ij}(d_i,d_j)$ where $x_j$ is a successor of $x_i$. Each time the agent updates the local embeddings via a single step of model inference after receiving the embeddings from its precursors. Taking the tripartite graph in Fig.~\ref{fig:arch}(d) as an example, $x_2$ maintains embeddings for $\langle x_2,R\rangle$, $f_{12}(R,R)$, $f_{12}$, $f_{23}(R,R)$ and $f_{23}(R,L)$. To update its local embeddings for $\langle x_2,R\rangle$, $f_{12}(R,R)$ and $f_{12}$, $x_2$ only needs one step of model inference after receiving the latest embedding of constraint-cost nodes $f_{23}(R,R)$ and $f_{23}(R,L)$ from its precursor $x_3$. 


Next, we give details about the schema. First, we use primitive \textsc{Stack} to concatenate the initial features of local nodes\footnote{We omit unary functions for simplicity.} to construct the initial embeddings $H^{(0)}_i$ (line 3-13). After that, agent $i$ computes its first round embeddings and sends the updated embeddings of constraint-cost nodes to each of its successors (line 14-15). If agent $i$ is a source node, i.e., $P_i=\emptyset$, it directly updates the subsequent embeddings and sends the constraint-cost node embeddings at each timestep to its successors (line 16-20) since it does not need to wait for the embeddings from its precursors. Besides, the agent also sends the local accumulated function node embedding $\ell_i$ to one of its successors (line 21).

After receiving the constraint-cost node embeddings from its precursor $j$, agent $i$ temporarily stores the embeddings to $Cache_i$ according to the timestamp $t_j$ (line 23). If all the precursors' constraint-cost node embeddings for the $t_i$-th layer have arrived, agent $i$ updates the local embedding $H_i^{(t_i)}$ with those embeddings stored in $Cache_i[t_i]$ (line 24-26). Then the agent computes the embeddings $H_i^{(t_i+1)}$ and sends the up-to-date embeddings to its successors (line 27-30). If all GAT layers are exhausted, the agent computes the local accumulated function node embedding $\ell_i$ and sends it to one of its successors (line 31-32). After receiving an accumulated function-node embedding, agent $i$ either directly forwards the embedding to one of its successors or adds to its own accumulated function-node embedding, depending on whether it is the target agent (line 34-37). After received the accumulated embedding messages of all the other agents, the target agent $m$ outputs the predicted optimal cost by
\begin{equation}
    \hat{c}_m=\mathcal{M}_{\theta, (T+1)}(H_m^{(T)}[\langle x_m,d_m\rangle]\oplus \ell_m), \label{eq-dis-mlp}
\end{equation}
where $H_m^{(T)}[\langle x_m,d_m\rangle]$ is the embedding for variable-assignment node $\langle x_m,d_m\rangle$ in $H_m^{(T)}$ (line 38-39).

\newtheorem{proposition}{Proposition}
\newtheorem{lemma}{Lemma}
We now show the soundness and complexity of DES. We first show that DES results in the same embeddings as its centralized counterpart.
\begin{lemma}
In DES, each agent $i$ with $P_i\ne \emptyset$ receives exactly $T-1$ constraint-cost node embedding messages from $j,\forall j\in P_i$, one for each timestep $t_j=1,\dots,T-1$. 
\end{lemma}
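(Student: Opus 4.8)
The plan is to establish a slightly stronger statement than what is asked and derive the lemma from it. The auxiliary claim is: \emph{every agent $j$ sends a constraint-cost node embedding message to each of its successors exactly once for each timestep $t_j = 1,\dots,T-1$, and at no other timestep.} Since the tripartite graph, and hence the agent-level DAG built from the pseudo tree, is directed and acyclic, the agents admit a topological order, and I would prove the auxiliary claim by induction along this order. The lemma then follows immediately: for an agent $i$ with $P_i\neq\emptyset$, each $j\in P_i$ is an agent having $i$ among its successors, so $j$ sends $i$ exactly one message per timestep $t_j=1,\dots,T-1$, i.e.\ $T-1$ messages in total.

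For the base case, consider a source agent $j$, i.e.\ $P_j=\emptyset$. Lines 14--15 of Algorithm~2 compute $H_j^{(1)}$ and send the layer-$1$ constraint-cost embeddings to every successor, and the source branch in lines 16--20 then computes $H_j^{(t_j)}$ for $t_j=2,\dots,T$, transmitting the constraint-cost embeddings to all successors precisely when $t_j<T$. Finally line 21 sends only the accumulated embedding $\ell_j$, which is not a constraint-cost message. Hence a source emits exactly one constraint-cost message per timestep $t_j\in\{1,\dots,T-1\}$ and nothing of that kind afterwards, proving the claim for sources.

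For the inductive step, take an agent $j$ with $P_j\neq\emptyset$ and assume the claim for all of $j$'s precursors; thus each $k\in P_j$ sends $j$ exactly one constraint-cost message with timestamp $t_k=1,\dots,T-1$. Assuming reliable FIFO channels between neighboring agents (the standard model, also matching the timestamp-indexed \textit{Cache} used by the algorithm), these arrive on each channel in increasing timestamp order. I would then run an inner induction on $\ell$ showing that $j$'s local counter $t_j$ takes the values $1,2,\dots,T$ in that order, each reached exactly once, with the transition $\ell\to\ell+1$ occurring exactly when the last layer-$\ell$ message arrives and first completes $j$'s layer-$\ell$ cache, and accompanied — via lines 24--30 — by sending the layer-$(\ell+1)$ constraint-cost embeddings to all successors when $\ell+1<T$ (and, at $\ell+1=T$, by the else-branch in lines 31--32 sending only $\ell_j$). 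The two points to check at the moment the last layer-$\ell$ message arrives are: (i) $t_j=\ell$ there — it cannot exceed $\ell$, since advancing past $\ell$ would have required the layer-$\ell$ cache to be full earlier, contradicting ``first completes''; and it cannot be below $\ell$, by the inner hypothesis together with the initialization $t_j\gets1$ in line 14 and the fact (FIFO) that all lower-indexed messages from all precursors have already arrived and been processed; and (ii) the layer-$(\ell+1)$ cache is not yet full, because FIFO guarantees the last-arriving precursor has not yet transmitted its layer-$(\ell+1)$ message, so no layer is skipped. Combined with line 14--15 providing the layer-$1$ message, this yields exactly one constraint-cost message per timestep $t_j=1,\dots,T-1$. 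For the matching upper bound, precursors stop emitting after timestamp $T-1$, so $j$'s layer-$T$ cache never fills, $j$ never processes a further layer in \textsc{Receive}, and it sends no additional constraint-cost message.

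Assembling these, agent $i$ with $P_i\neq\emptyset$ receives from each $j\in P_i$ precisely the $T-1$ constraint-cost messages that $j$ emits toward its successor $i$, one for each timestamp $t_j=1,\dots,T-1$, which is the lemma. I expect the main obstacle to be exactly the bookkeeping in the inductive step — ruling out stalls and out-of-order processing in the cache-and-timestamp logic of \textsc{Receive} — which is the one place where the FIFO-channel assumption is genuinely needed; once progression of $t_j$ is pinned down, the counting is routine.
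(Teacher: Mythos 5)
Your proposal is correct and follows essentially the same route as the paper: an induction along the DAG (sources as the base case, agents with nonempty precursor sets as the inductive step), counting the one message sent at line 15 plus the $T-2$ messages sent at lines 27--30. Your version is in fact more careful than the paper's — the inner induction on the local counter $t_j$ and the explicit FIFO-channel assumption address a real gap (the cache-completeness check at line 24 fires only on message receipt, so out-of-order delivery could stall an agent), whereas the paper simply asserts that the condition of line 24 holds for each $t_j=1,\dots,T-1$ without justifying the orderly progression of the counter.
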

\begin{proof}
Consider the base case where all the precursors are source i.e., $P_j=\emptyset,\forall j\in P_i$. Since it cannot receive a embedding from other agent, each precursor $j$ sends exactly $T-1$ constraint-cost node embeddings to $i$, one for each timestep $t_j=1,\dots,T-1$ according to line 15, 19-20.

Assume that the lemma holds for all $j\in P_i$ with $P_j\ne\emptyset$. By assumption, the condition of line 24 holds for $t_j=1,\dots,T-1$ and hence precursor $j$ sends embedding to $i$ for $t_j=2,\dots,T-1$ (line 27-30). Together with the embedding sent in line 15, each precursor $j$ sends $T-1$ constraint-cost node embedding messages to $i$ in total, one for each timestep $t_j=1,\dots,T-1$, which concludes the lemma. 
\end{proof}

\begin{lemma}
For any agent $i$ and timestep $t=1,\dots,T$, after performing DES, its local embeddings are the same as the ones in $H^{(t)}$. I.e., $H^{(t)}_i[\langle x_i,d_i\rangle]=H^{(t)}[\langle x_i,d_i\rangle]$, $H_i^{(t)}[f_{ij}(d_i,d_j)]=H^{(t)}[f_{ij}(d_i,d_j)]$, $H_i^{(t)}[f_{ij}]=H^{(t)}[f_{ij}]$, $\forall d_i\in D_i, x_j\in S_i, d_j\in D_j$.
\end{lemma}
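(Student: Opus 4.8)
My plan is to prove the statement by induction on the GAT layer index $t$, after strengthening the hypothesis so that it also controls the \emph{transient} copies each agent holds of constraint‑cost nodes owned by its precursors. Concretely, the invariant I would carry is: for every agent $i$, at the instant $i$ evaluates $\mathcal{M}_{\theta,(t)}(H_i^{(t-1)})$, the tensor $H_i^{(t-1)}$ agrees with $H^{(t-1)}$ on every node $i$ stores — its variable‑assignment nodes, its owned function and constraint‑cost nodes $f_{ij}$, $f_{ij}(d_i,d_j)$ for $x_j\in S_i$, and the copies of $f_{ji}(d_j,d_i)$ for $j\in P_i$ that have by then been received and overwritten. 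The base case $t=1$ is read off from \textsc{Initialization}: lines 3-13 build each node's layer‑$0$ feature exactly as in the centralized representation, since that feature is determined solely by the node's type and, for a constraint‑cost node, its cost — all of which $i$ knows locally — so $H_i^{(0)}$ matches $H^{(0)}$ node‑wise, and hence $H_i^{(1)}=\mathcal{M}_{\theta,(1)}(H_i^{(0)})$ matches $H^{(1)}$ on every stored node (the transient precursor copies included: $\langle x_j,d_j\rangle$ is unavailable to $i$ but is irrelevant at layer $0$).

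For the inductive step I would first dispatch the asynchrony using Lemma~1 together with the timestamp cache. Lemma~1 gives that each $j\in P_i$ sends $i$ exactly one constraint‑cost message per layer $t_j=1,\dots,T-1$; since these are filed in $Cache_i$ by their timestamp and $i$ advances $t_i$ from $t-1$ to $t$ only once $|Cache_i[t-1]|=|P_i|$, agent $i$ necessarily (i) processes its layers in the order $1,2,\dots,T$ and (ii) just before computing $H_i^{(t)}$ has written every entry $H_i^{(t-1)}[f_{ji}(\cdot,\cdot)]$, $j\in P_i$, to the value $j$ sent for layer $t-1$. By the strengthened induction hypothesis applied to agent $j$ — for which $f_{ji}(\cdot,\cdot)$ is an owned node, as $x_i\in S_j$ — that value equals $H^{(t-1)}[f_{ji}(\cdot,\cdot)]$, so after the overwrite all precursor copies in $H_i^{(t-1)}$ are correct. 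For source agents ($P_i=\emptyset$) there is nothing to overwrite and no message to wait for, and the argument below applies verbatim.

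Next I would use the directed, acyclic tripartite structure to verify, node type by node type, that the layer‑$t$ update of each node $i$ owns reads only locally held, now‑correct layer‑$(t-1)$ values: the in‑neighbours of $\langle x_i,d_i\rangle$ are the (just overwritten) nodes $f_{ji}(d_j,d_i)$ with $j\in P_i$ — plus any unary constraint‑cost nodes, which $i$ owns — and the node itself; the in‑neighbours of an owned $f_{ij}(d_i,d_j)$ are $\langle x_i,d_i\rangle$ and the node itself; and the in‑neighbours of an owned $f_{ij}$ are precisely the owned nodes $f_{ij}(d_i,d_j)$ over $d_i\in D_i, d_j\in D_j$, together with the node itself. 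In each case every argument of the GAT aggregation lies in $H_i^{(t-1)}$ and equals the corresponding entry of $H^{(t-1)}$ by the induction hypothesis, so $\mathcal{M}_{\theta,(t)}$ reproduces the centralized update and $H_i^{(t)}[v]=H^{(t)}[v]$ for every owned $v$. To close the strengthened invariant at layer $t$, I would note that $i$'s own premature recomputation of the precursor copies $f_{ji}(d_j,d_i)$ is harmless: it is never transmitted (the algorithm sends only owned constraint‑cost embeddings, lines 15, 19-20, 27-30), and it is read only when $i$ later advances to layer $t+1$, by which time (for $t\le T-1$) it has been replaced by $j$'s correct layer‑$t$ message, while for $t=T$ it is never read at all.

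The part I expect to be delicate is not the GAT algebra but the scheduling argument: making fully precise that, for each $t$, the authoritative copy of every shared constraint‑cost node is in place at the exact moment $\mathcal{M}_{\theta,(t)}$ is applied, and that no stale copy is ever consumed. This hinges on the interplay of the per‑layer message count from Lemma~1, the trigger $|Cache_i[t_i]|=|P_i|$, and the acyclicity of the agent DAG, which both guarantees the source agents that seed the cascade and underpins the fact that induction on $t$ alone suffices — no node's layer‑$t$ value ever depends on another node's layer‑$t$ value.
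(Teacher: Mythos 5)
Your proof is correct and follows essentially the same route as the paper's: induction on the layer index $t$, with the base case read off from \textsc{Initialization} and the inductive step resting on Lemma~1 plus the $|Cache_i[t_i]|=|P_i|$ trigger to guarantee that the authoritative layer-$(t-1)$ precursor constraint-cost embeddings are in place at the moment $\mathcal{M}_{\theta,(t)}$ is applied. Your write-up is more careful than the paper's --- the paper only works out the variable-assignment case and does not explicitly address the stale transient copies of precursor nodes --- but the underlying argument is the same.
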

\begin{proof}
We only show the proof for variable-assignment nodes. Similar argument can be applied to constraint-cost nodes and function nodes.

In the first timestep, i.e., $t=1$, for each node $\langle x_i,d_i\rangle$, Eq.~(\ref{eq-embed}) computes $H^{(1)}[\langle x_i,d_i\rangle]$ based on the initial feature $H^{(0)}[f_{ij}(d_i,d_j)]$, $\forall j\in P_i, d_j\in D_j$, which is the same as in DES, i.e., line 11-14.
 
Assume that the lemma holds for $t>1$. Before computing the embeddings for $(t+1)$-th timestep, agent $i$ must have received the embedding $H_j^{(t)}[f_{ij}(d_i,d_j)]$, which equals to $H^{(t)}[f_{ij}(d_i,d_j)]$ according to the assumption, from $j, \forall j\in P_i, d_i\in D_i,d_j\in D_j$ (line 20, 30 23-26, Lemma 1). Therefore, agent $i$ computes $H_i^{(t+1)}[\langle x_i,d_i\rangle]$ according to $H^{(t)}[f_{ij}(d_i,d_j)],\forall j\in P_i, d_j\in D_j$, which is equivalent to Eq.~(\ref{eq-embed})). Consequently, $H_i^{(t+1)}[\langle x_i,d_i\rangle]=H^{(t+1)}[\langle x_i,d_i\rangle]$ and the lemma holds by induction.
\end{proof}
\begin{lemma}
For target agent $m$, after performing DES, $\ell_m=\sum_{v_i\in V_F}h_i^{(T)}$.
\end{lemma}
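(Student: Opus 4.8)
The plan is to combine three facts: a partition of the function-node set $V_F$ among the agents, Lemma~2 (local embeddings coincide with the centralized ones), and a conservation argument for the accumulated-embedding messages. Since the target $x_m$ is the root of the pseudo tree, it is the unique sink of the DAG, so $S_m=\emptyset$; hence agent $m$ owns no binary function node, its accumulator $\ell_m$ has base value $\vec 0$, and its reported value is exactly the sum of all accumulated embeddings that reach it and are absorbed via \textsc{Add} (lines~34--37). It therefore suffices to show that this sum equals $\sum_{i\in I}\sum_{x_j\in S_i}H_i^{(T)}[f_{ij}]$ and that the latter equals $\sum_{v_i\in V_F}h_i^{(T)}$.

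For the second equality, observe that each function node is maintained by exactly one agent: for a constrained pair $x_i,x_j$ the pseudo tree fixes which endpoint is the precursor, and the initialization (lines~3--13) stacks the function node of that constraint into $H_i^{(0)}$ only at the precursor $i$, i.e.\ only when $x_j\in S_i$ (unary function nodes are omitted, per the footnote). Thus $V_F=\bigsqcup_{i\in I}\{f_{ij}:x_j\in S_i\}$, so $\sum_{v_i\in V_F}h_i^{(T)}=\sum_{i\in I}\sum_{x_j\in S_i}H^{(T)}[f_{ij}]$, and Lemma~2 lets us replace each $H^{(T)}[f_{ij}]$ by the locally held $H_i^{(T)}[f_{ij}]$.

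For the first equality I would trace the accumulated-embedding messages. By Lemma~1 every non-source agent eventually advances its counter to $t_i=T$ (a source agent does so already in initialization, lines~16--20), at which moment it forms $\ell_i=\sum_{x_j\in S_i}H_i^{(T)}[f_{ij}]$ and sends it to one successor (line~21 or lines~31--32). Every non-target recipient forwards such a message unchanged to one of its successors (lines~34--35); the target absorbs it. Because the DAG is the pseudo tree with edges directed to parents and pseudo-parents, it is finite, acyclic, and has $x_m$ as its unique sink; hence every message, freshly injected or forwarded, follows a simple path that must end at $x_m$ and is \textsc{Add}ed into $\ell_m$ exactly once. Summing over the $|I|-1$ non-target agents gives $\ell_m=\sum_{i\in I}\sum_{x_j\in S_i}H_i^{(T)}[f_{ij}]$, and chaining with the previous paragraph proves the lemma.

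I expect the conservation step to be the main obstacle: one must check that choosing an arbitrary successor at each hop never drops a message and never lets one re-enter a node -- precisely where acyclicity and uniqueness of the sink are used -- that every non-source agent indeed reaches $t_i=T$ so that no $\ell_i$ is left uncomputed (Lemma~1), and that agent $m$'s accumulator is not overwritten once absorptions begin (consistent with $S_m=\emptyset$, lines~31--32 are a no-op for the target). The partition of $V_F$ and the appeal to Lemma~2 are routine.
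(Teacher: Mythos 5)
Your proposal is correct and follows essentially the same route as the paper's proof: show each non-target agent emits exactly one accumulated-embedding message (via Lemma~1 and the counter reaching $t_i=T$), argue that acyclicity and the path-to-target property deliver every such message to $m$ exactly once, invoke Lemma~2 to identify local with centralized embeddings, and use the fact that each $f_{ij}$ is owned by exactly one endpoint (its precursor) to turn $\sum_{i\in I}\sum_{j\in S_i}H^{(T)}[f_{ij}]$ into $\sum_{v_i\in V_F}h_i^{(T)}$. Your extra care about $S_m=\emptyset$ and the accumulator not being overwritten is a reasonable refinement of details the paper leaves implicit, but the argument is the same.
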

\begin{proof}
We prove the lemma by showing each agent sends exactly one accumulated embedding message w.r.t. its local function nodes to one of its successors (i.e., line 21 and 32). It is trivial for the agents without precursor since they do not receive any message (line 28) and only send one accumulated embedding message by the end of procedure \textsc{Initialization} (line 21).

Consider an agent $i$ with $P_i\ne\emptyset$. According to Lemma 1, $i$ executes line 27-32 for $T-1$ times. Given the initial value of 1 (line 14), $t_i$ will eventually equal to $T$, implying line 32 will be executed only once. Since it does not perform line 21, $i$ sends exactly one accumulated embedding message w.r.t. its local function nodes.

Since by construction each agent in the DAG has a path to the target agent $m$, all the accumulated embeddings will be forwarded to $m$ (line 34-37). Therefore, by Lemma 2,
\[
\ell_m=\sum_{i\in I}\sum_{j\in S_i}H_i^{(T)}[f_{ij}]=\sum_{i\in I}\sum_{j\in S_i}H^{(T)}[f_{ij}].
\]
Note that $\forall f_{ij}\in F$, it must be either the case $j\in S_i$ if $i\prec j$ or the case $i\in S_j$ if $j\prec i$ in the DAG. Hence,
\[
\ell_m=\sum_{i\in I}\sum_{j\in S_i}H^{(T)}[f_{ij}]=\sum_{f_{ij}\in F}H^{(T)}[f_{ij}]=\sum_{v_i\in V_F}h_i^{(T)}.
\]
\end{proof}

Then we show the soundness of our DES as follows:
\begin{proposition}
DES is sound, i.e., Eq.~(\ref{eq-dis-mlp}) returns the same result as Eq.~(\ref{eq-mlp}).
\end{proposition}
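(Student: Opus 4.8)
The plan is to recognize Proposition 1 as an immediate corollary of Lemmas 1--3, obtained by checking that the two arguments fed into the final fully-connected layer $\mathcal{M}_{\theta,(T+1)}$ in Eq.~(\ref{eq-dis-mlp}) coincide with the two arguments in Eq.~(\ref{eq-mlp}). Since $\mathcal{M}_{\theta,(T+1)}$ is a fixed deterministic map, equality of its inputs forces equality of its outputs, which is exactly the claim.

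First I would pin down the notational identification $h_m^{(T)} = H^{(T)}[\langle x_m, d_m\rangle]$: in the centralized computation of Eq.~(\ref{eq-embed}), $h_m^{(T)}$ is by definition the row of $H^{(T)}$ indexed by the target variable-assignment node $v_m=\langle x_m,d_m\rangle$. Applying Lemma 2 with $i=m$ and $d_i=d_m$ — the target agent is an ordinary agent in the DAG whose unique variable-assignment node $\langle x_m,d_m\rangle$ is created in the $x_i=x_m$ branch of \textsc{Initialization} — gives $H_m^{(T)}[\langle x_m,d_m\rangle]=H^{(T)}[\langle x_m,d_m\rangle]=h_m^{(T)}$, so the left component of the concatenation matches. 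For the right component, Lemma 3 states directly that $\ell_m=\sum_{v_i\in V_F}h_i^{(T)}$, so the accumulated function-node embedding matches as well.

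Before concluding I would also verify that the target agent actually executes the last procedure branch, i.e., that it does compute Eq.~(\ref{eq-dis-mlp}). This requires (i) that agent $m$ itself completes all $T$ rounds of local model inference so that $H_m^{(T)}[\langle x_m,d_m\rangle]$ is well defined — which follows from Lemma 1 when $P_m\neq\emptyset$ (it receives all needed precursor messages for $t_m=1,\dots,T-1$) and from the source-node branch of \textsc{Initialization} when $P_m=\emptyset$ — and (ii) that every agent's accumulated function-node embedding is forwarded to $m$ exactly once, which is precisely what the proof of Lemma 3 establishes via the path-to-root property of the DAG. Assembling (i), (ii), Lemma 2 and Lemma 3 then shows that the two equations feed identical inputs to $\mathcal{M}_{\theta,(T+1)}$, completing the proof.

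I do not expect a serious obstacle here: the real work — the message-count invariant, the layer-by-layer equivalence of embeddings, and the exactly-once aggregation of function nodes — has already been discharged in Lemmas 1--3, so Proposition 1 is essentially their conjunction. The only thing that needs care is the clean bookkeeping linking the distributed quantities $H_m^{(T)}[\langle x_m,d_m\rangle]$ and $\ell_m$ to the centralized quantities $h_m^{(T)}$ and $\sum_{v_i\in V_F}h_i^{(T)}$, which is routine once the notational identification above is made explicit.
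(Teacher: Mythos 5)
Your proposal is correct and follows essentially the same route as the paper: it invokes Lemma 2 to identify $H_m^{(T)}[\langle x_m,d_m\rangle]$ with $h_m^{(T)}$ and Lemma 3 to identify $\ell_m$ with $\sum_{v_i\in V_F}h_i^{(T)}$, and then concludes by determinism of the final layer. The additional bookkeeping you add (checking that agent $m$ completes all $T$ inference steps and receives every accumulated embedding exactly once) is already implicit in Lemmas 1--3 as the paper uses them, so it is the same argument, just spelled out more carefully.
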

\begin{proof}
According to the Lemma 2 and Lemma 3, by the end of DES, the target agent has the same variable-assignment embedding and accumulated function node embedding as the ones computed by Eq.~(\ref{eq-embed}). Therefore, Eq.~(\ref{eq-dis-mlp}) is equivalent to Eq.~(\ref{eq-mlp}).
\end{proof}

Finally, we show the complexity of our DES as follows:
\begin{proposition}
Each agent in DES requires $T$ steps of model inference, $O(|I|d^2)$ spaces, and communicates $O(T|I|d^2)$ information.
\end{proposition}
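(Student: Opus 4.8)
The plan is to establish the three claims separately, each by a direct line-by-line trace of Algorithm~2, reusing Lemma~1 and Lemma~3 for the counting steps.

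\emph{Inference steps.} First I would show that every agent $i$ invokes a GAT layer exactly $T$ times. The call on line~14 performs the $t_i=1$ inference. If $P_i=\emptyset$, the loop on lines~16--20 performs one inference for each $t_i=2,\dots,T$, for $T$ in total. If $P_i\neq\emptyset$, Lemma~1 tells us the cache-full condition on line~24 is satisfied exactly $T-1$ times, and each satisfaction increments $t_i$ and runs one inference (line~28), carrying $t_i$ from $2$ up to $T$; together with line~14 this is again $T$. The target agent then applies the single fully-connected layer $\mathcal{M}_{\theta,(T+1)}$ once in Eq.~(\ref{eq-dis-mlp}), which is $O(1)$ additional work and does not affect the count of GAT inferences.

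\emph{Space.} Next I would enumerate the state held by agent $i$: its $\le d$ variable-assignment nodes and the $O(d)$ induced unary nodes; for each successor $j\in S_i$, the function node $f_{ij}$ together with its $O(d^2)$ constraint-cost nodes; and the $O(d^2)$ constraint-cost nodes it owns on each incident edge to a precursor. Since $|S_i|,|P_i|\le|I|$, this amounts to $O(|I|d^2)$ embedding vectors, and because the recurrence in Eq.~(\ref{eq-embed}) reads only layer $t-1$ to produce layer $t$, keeping one layer's worth at a time suffices. The only extra structure is $Cache_i$, whose blocks each have $O(d^2)$ rows and become dead once their layer has been consumed on lines~24--26; treating the layer count $T$ and the GAT hidden dimension as the fixed hyperparameters they are, $Cache_i$ contributes $O(|I|d^2)$ as well, so the per-agent space is $O(|I|d^2)$.

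\emph{Communication.} Finally I would count the information agent $i$ transmits. By the proof of Lemma~1 it sends a constraint-cost embedding block to each successor for layers $1,\dots,T-1$ (lines~15, 20, 27--30); each block is $O(d^2)$ words (one fixed-size embedding vector per cost entry of $f_{ij}$), and with $O(|I|)$ successors this is $O(T|I|d^2)$. In addition, agent $i$ originates exactly one accumulated function-node embedding $\ell_i$ (lines~21, 32), of $O(1)$ size, and, when it is not the target, forwards each accumulated embedding it receives to one successor (lines~34--37); since only $|I|$ such embeddings exist over the whole run, agent $i$ forwards at most $O(|I|)$ of them, each $O(1)$. Hence the per-agent communication is $O(T|I|d^2)+O(|I|)=O(T|I|d^2)$.

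I expect the space bound to be the only delicate point: one has to verify that neither the layered recurrence nor the receive-cache ever forces an agent to retain $\Theta(T)$ layers of embeddings simultaneously, and that each transmitted or cached block is $O(d^2)$ words under the standing convention that $T$ and the embedding width are constants; the inference count and the communication count are then straightforward bookkeeping on top of Lemmas~1 and~3.
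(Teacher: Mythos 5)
Your proof is correct and follows essentially the same route as the paper's: counting the $T$ inference calls via line~14 plus Lemma~1, enumerating the $O(d)$ variable-assignment, $O((|S_i|+|P_i|)d^2)$ constraint-cost, and $O(|S_i|)$ function nodes for space, and charging $O(d^2)$ per successor per layer for communication. Your additional bookkeeping for $Cache_i$ and for forwarding the $O(|I|)$ accumulated-embedding messages is a welcome refinement the paper glosses over, but it does not change the argument.
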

\begin{proof}
By line 14, 27-28 and Lemma 1, each agent performs $T$ times of model inference. Each agent $i$ needs to maintain embedding for $O(d)$ assignment-variable nodes (line 4-6), $O(|S_i|d^2)+|P_i|d^2)$ constraint-cost nodes (line 7, 9-13), and $O(|S_i|)$ function nodes (line 8). Since in the worst case, the agent is constrained with all the other $|I|-1$ agents, $i$'s space complexity is $O(|I|d^2)$. Finally, since for each timestep $t_i=1,\dots,T-1$ agent $i$ sends the constraint-cost node embeddings to its successors, its communication overhead is $O(T|S_i|d^2)=O(T|I|d^2)$.
\end{proof}
\begin{figure*}
    \centering
    \begin{minipage}{.33\linewidth}
      \centering
      \includegraphics[width=\linewidth]{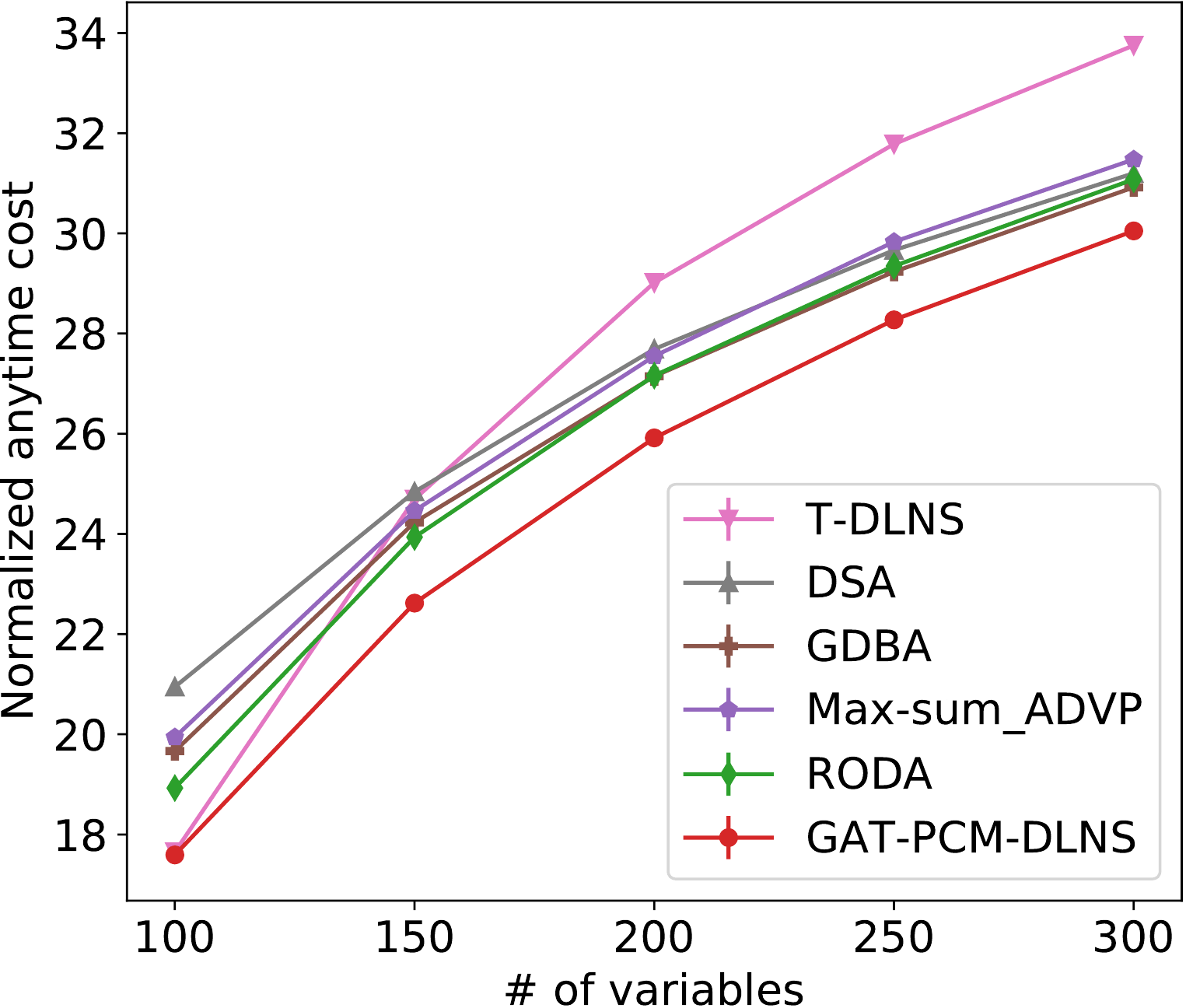}\\
      (a) $p_1=0.05$, $d=10$
    \end{minipage}
    \begin{minipage}{.33\linewidth}
      \centering
      \includegraphics[width=\linewidth]{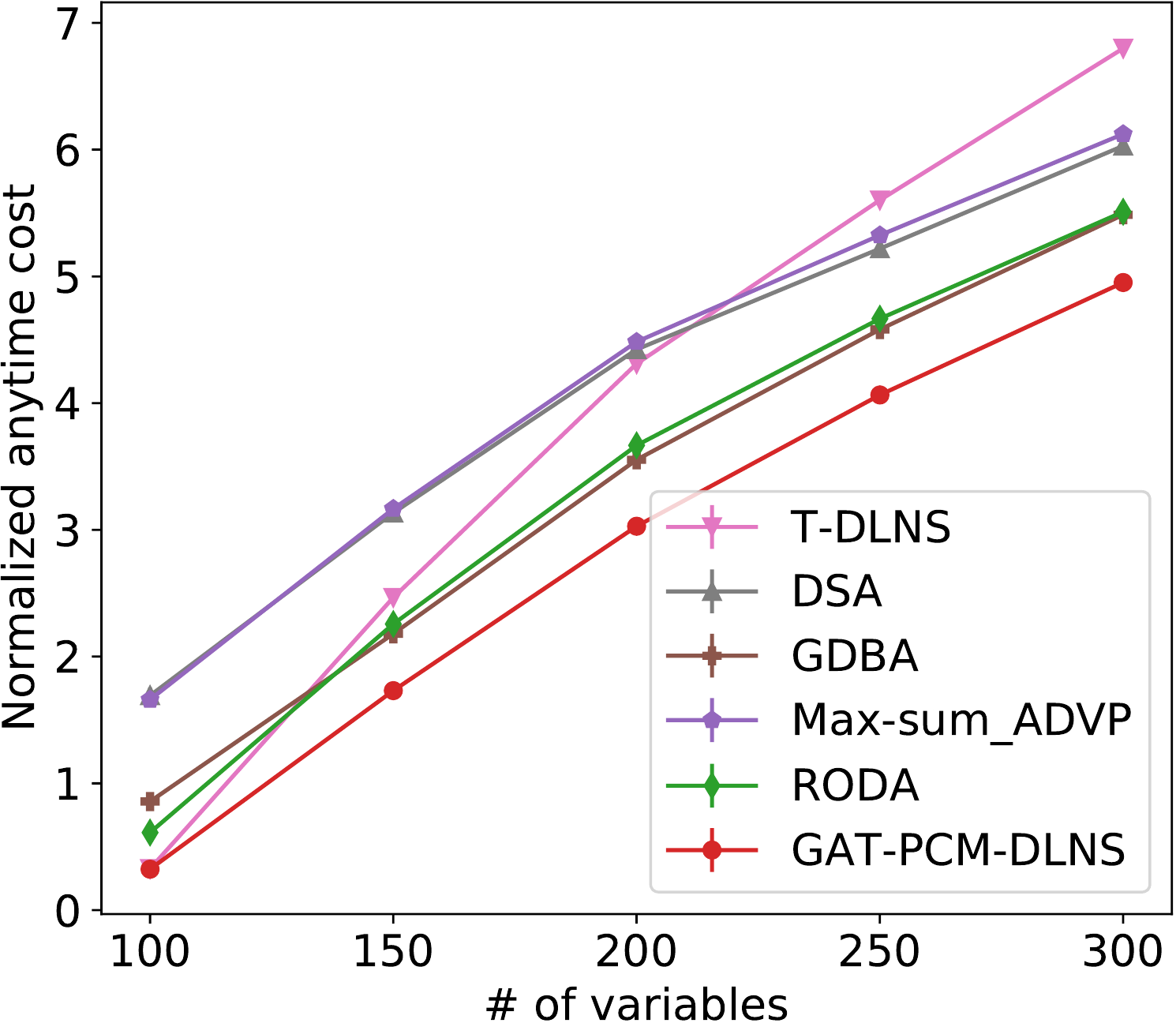}\\
      (b) $p_1=0.05$, $d=3$
    \end{minipage}
    \begin{minipage}{.33\linewidth}
      \centering
      \includegraphics[width=\linewidth]{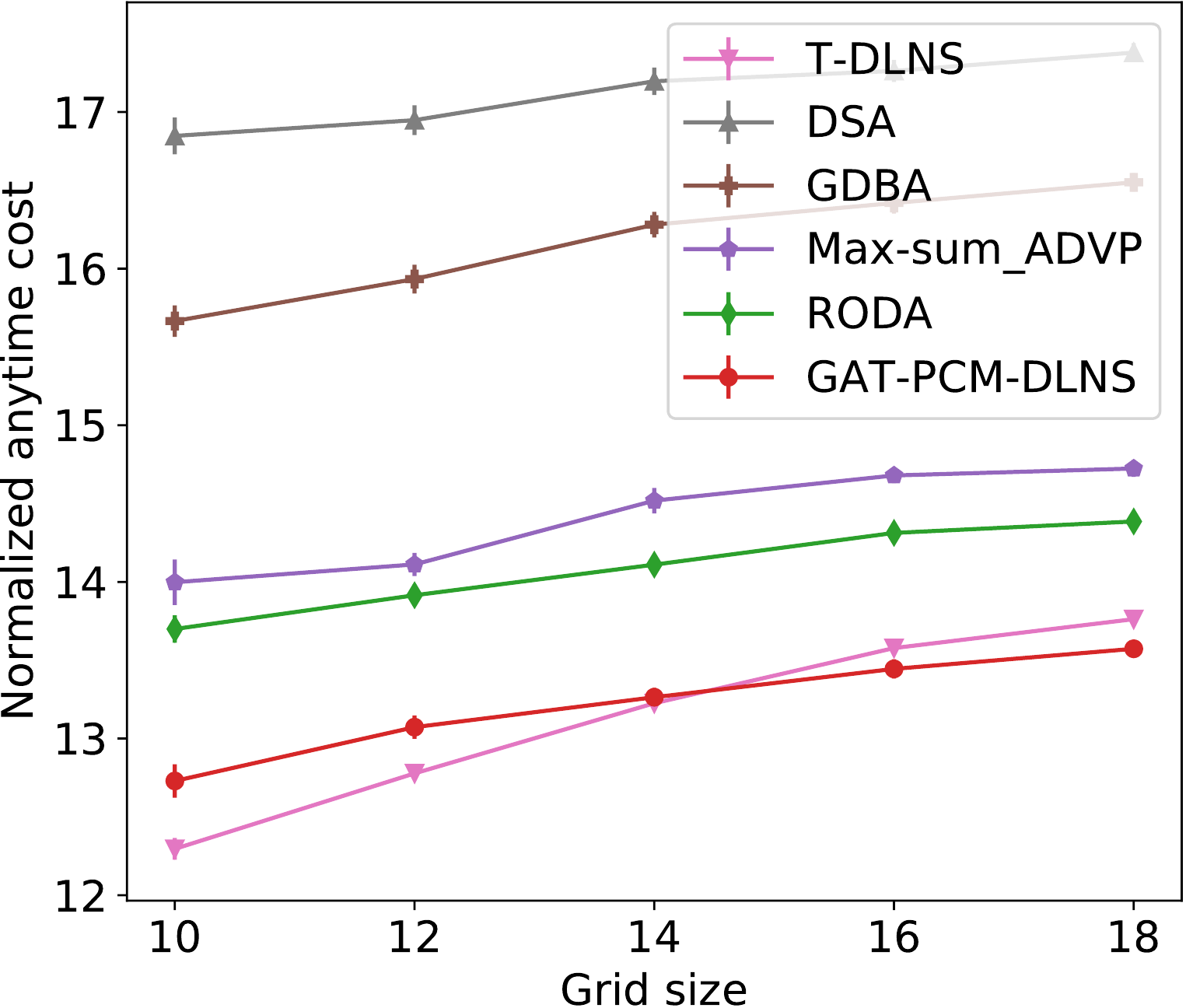}\\
      (c) $d=10$
    \end{minipage}
    \caption{Solution quality comparisons: (a) random DCOPs; (b) weighted graph coloring problems; (c) grid networks}\label{exp-dcop}
\end{figure*}

\paragraph{GAT-PCM as Heuristics}
Since our model GAT-PCM predicts the optimal cost of a target assignment given a partial assignment, it can serve as a general heuristic to boost the performance of a wide range of DCOP algorithms where the core operation is to evaluate the quality of an assignment. We consider two kinds of well-known algorithms and show how our model can boost them as follows:
\begin{itemize}
    \item Local search. A key task in local search is to find good assignments for a set of variables given the other variables' assignments. For example, in Distributed Large Neighborhood Search (DLNS) \cite{hoang2018large} , each round a subroutine is called to solve a subproblem induced by the destroyed variables (also called repair phase). Currently, DPOP \cite{petcuscalable} is used to solve a tree-structured relaxation of the subproblem, which ignores a large proportion of constraints and thus leads to poor performance on general problems. 
    Instead, we use our GAT-PCM to solve the subproblem without relaxation (i.e., all constraints between all pairs of destroyed variables are included) since the overhead is polynomial in the number of agents (cf. Proposition 2). Specifically, for each connected subproblem, we assume a variable ordering (e.g., lexicographical ordering, pseudo tree). Then we greedily assign each variable according to the costs predicted by GAT-PCM, i.e., we select an assignment with the smallest predicted cost for each variable.
    
    \item Backtracking search. Domain ordering is another important task in backtracking search for DCOPs. Previously, domain ordering utilizes local information only, e.g., prioritizing the assignment with minimum conflicts w.r.t. each unassigned variable \cite{frost1995look} or querying a lower bound lookup table. On the other hand, our GAT-PCM offers a more general and systematic way for domain ordering. Specifically, for an unassigned variable, we could query GAT-PCM for the optimal cost of each assignment under the current partial assignment and give the priority to the one with minimum predicted cost.
\end{itemize}

\section{Empirical Evaluation}
In this section, we perform extensive empirical studies. We begin with introducing the details of experiments and pretraining stage. Then we analyze the results and demonstrate the capability of our GAT-PCM to boost DCOP algorithms.
\paragraph{Benchmarks} We consider four types of benchmarks in our experiments, i.e., random DCOPs, scale-free networks, grid networks, and weighted graph coloring problems. For random DCOPs and weighted graph coloring problems, given density of $p_1\in (0,1)$, we randomly create a constraint for a pair of variables with probability $p_1$. For scale-free networks, we use the BA model \cite{barabasi1999emergence} with parameter $m_0$ and $m_1$ to generate constraint relations: starting from a connected graph with $m_0$ vertices, a new vertex is connected to $m_1$ vertices with a probability which is proportional to the degree of each existing vertex in each iteration. Besides, variables in a grid network are arranged into a 2D grid, where each variable is constrained with four neighboring variables excepts the ones located at the boundary. Finally, for each constraint in random DCOPs, scale-free networks and grid networks, we uniformly sample a cost from $[0,100]$ for each pair of variable-assignments. Differently, constraints of the weighted graph coloring problems incur a cost which is also uniformly sampled from $[0,100]$ if two constrained variables have the same assignment.

\paragraph{Baselines}We consider four types of baselines: local search, belief propagation, region optimal method, and large neighborhood search. We use DSA \cite{Zhang2005Distributed} with $p=0.8$ and GDBA \cite{okamoto2016distributed} with $\langle M,NM, T\rangle$ as two representative local search methods, Max-sum$\_$ADVP \cite{zivan2017balancing} as a representative belief propagation method, RODA \cite{grinshpoun2019privacy} with $t=2,k=3$ as a representative region optimal method, and T-DLNS \cite{hoang2018large} with destroy probability $p=0.5$ as a representative large neighborhood search method.

All experiments are conducted on an Intel i9-9820X workstation with GeForce RTX 3090 GPUs. For each data point, we average the results over 50 instances and report standard error of the mean (SEM) as confidence intervals.
\begin{figure}
    \centering
    \includegraphics[width=.6\linewidth]{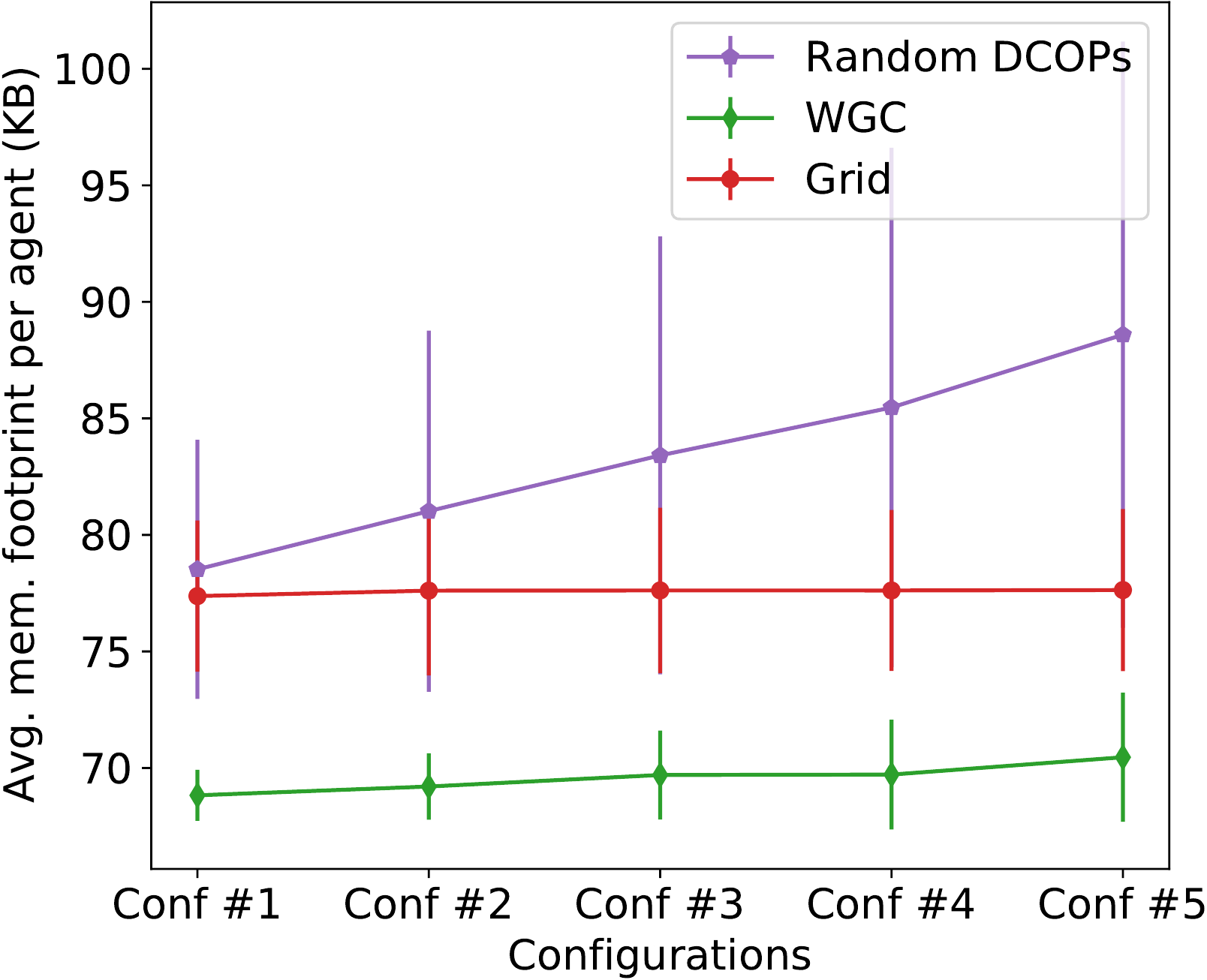}
    \caption{Memory footprint of GAT-PCM-DLNS}\label{fig:mem}
\end{figure}

\paragraph{Implementation and Hyperparameters} Our GAT-PCM model has four GAT layers (i.e., $T=4$). Each layer in the first three layers has 8 output channels and 8 heads of attention, while the last layer has 16 output channels and 4 heads of attention. Each GAT layer uses \texttt{ELU} \cite{clevertUH15} as the activation function. In the pretraining stage, we consider a random DCOP distribution with $|I|\in[15,30]$, $d\in[3,15]$ and $p_1\in[0.1,0.4]$. Finally, we use DPOP \cite{petcuscalable} to generate the optimal cost labels. 

For hyperparameters, we set the batch size and the number of training epochs to be 64 and 5000, respectively. Our model was implemented with the PyTorch Geometric framework \cite{lenssen2019} and the model was trained with the Adam optimizer \cite{kingma2014adam} using the learning rate of 0.0001 and a $5\times 10^{-5}$ weight decay ratio.

\paragraph{Results} In the first set of experiments, we evaluate the performance of our GAT-PCM when combined with the DLNS framework, which we name it GAT-PCM-DLNS, in solving large-scale DCOPs. We run GAT-PCM-DLNS with destroy probability of 0.2 for 1000 iterations and report the normalized anytime cost (i.e., the best solution cost divided by the number of constraints) as the result. Fig.~\ref{exp-dcop} presents the results of solution quality where all baselines run for the same simulated runtime as GAT-PCM-DLNS. It can be seen that DSA explores low-quality solutions since it iteratively approaches a Nash equilibrium, resulting in 1-opt solutions similar to Max-sum$\_$ADVP. GDBA improves by increasing the weights when agents get trapped in quasi-local minima. RODA finds solutions better than 1-opt by coordinating the variables in a coalition  of size 3. T-DLNS, on the other hand, tries to optimize by optimally solving a tree-structured relaxation of the subproblem induced by the destroyed variables in each round. However, T-DLNS could ignore a large proportion of constraints and therefore perform poorly when solving complex problems (e.g., the problems with more than 200 variables). Differently, our GAT-PCM-DLNS solves the induced subproblem without relaxation, leading to a significant improvement over the state-of-the-arts when solving unstructured problems (i.e., Fig.~\ref{exp-dcop}(a-b)). Interestingly, T-DLNS achieves the best performance when solving small grid networks. That is because the variables in the problem are under-constrained and T-DLNS only needs to drop few edges to obtain a tree-structured problem. In fact, the average degree in these problems is less than 3.8. However, our GAT-PCM-DLNS still outperforms T-DLNS when the grid size is higher than 14.
\begin{figure}
    \centering
    \begin{minipage}{.495\linewidth}
      \centering
      \includegraphics[width=\linewidth]{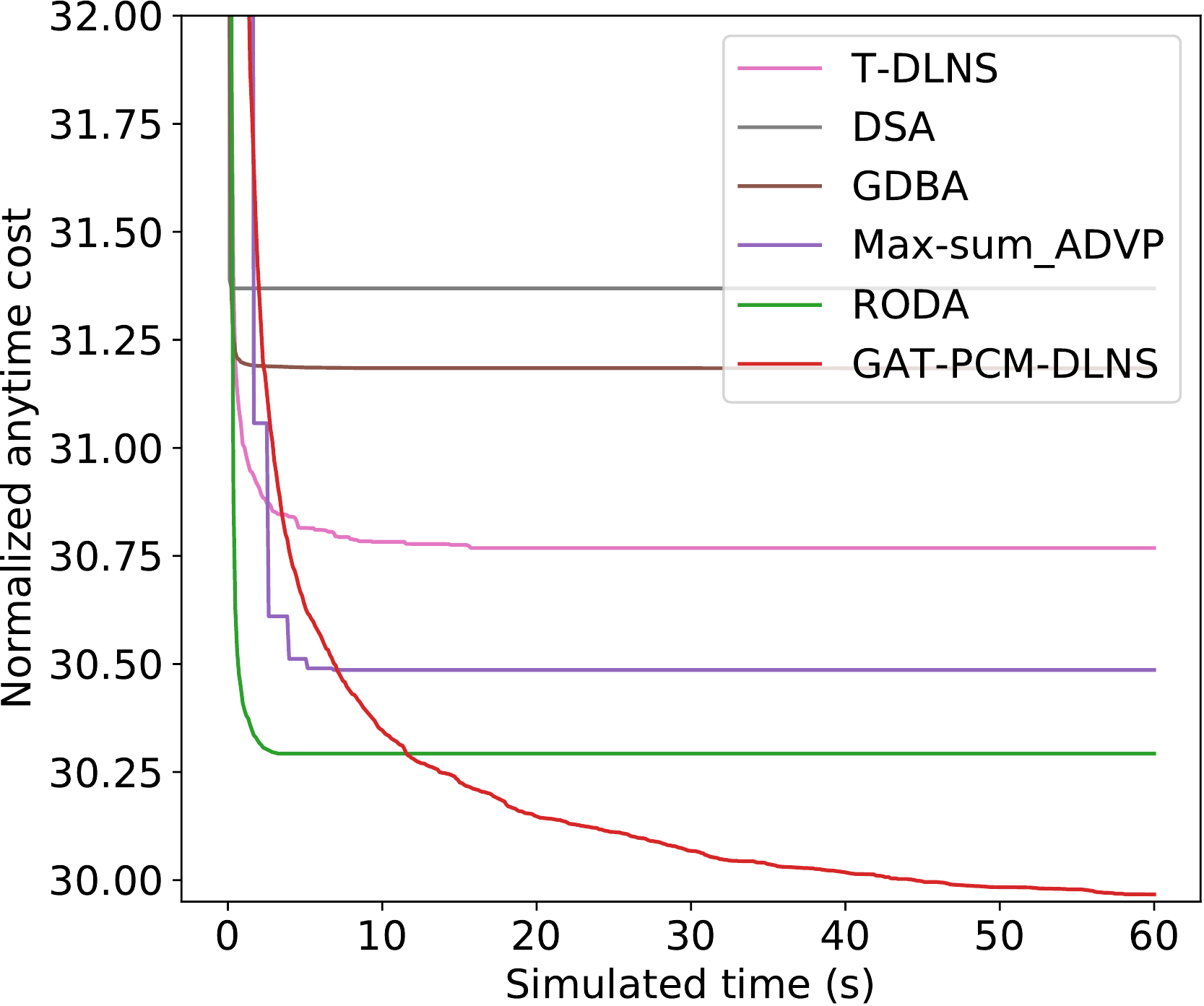}\\
      (a) random DCOPs
    \end{minipage}
    \begin{minipage}{.495\linewidth}
    \centering
    \includegraphics[width=\linewidth]{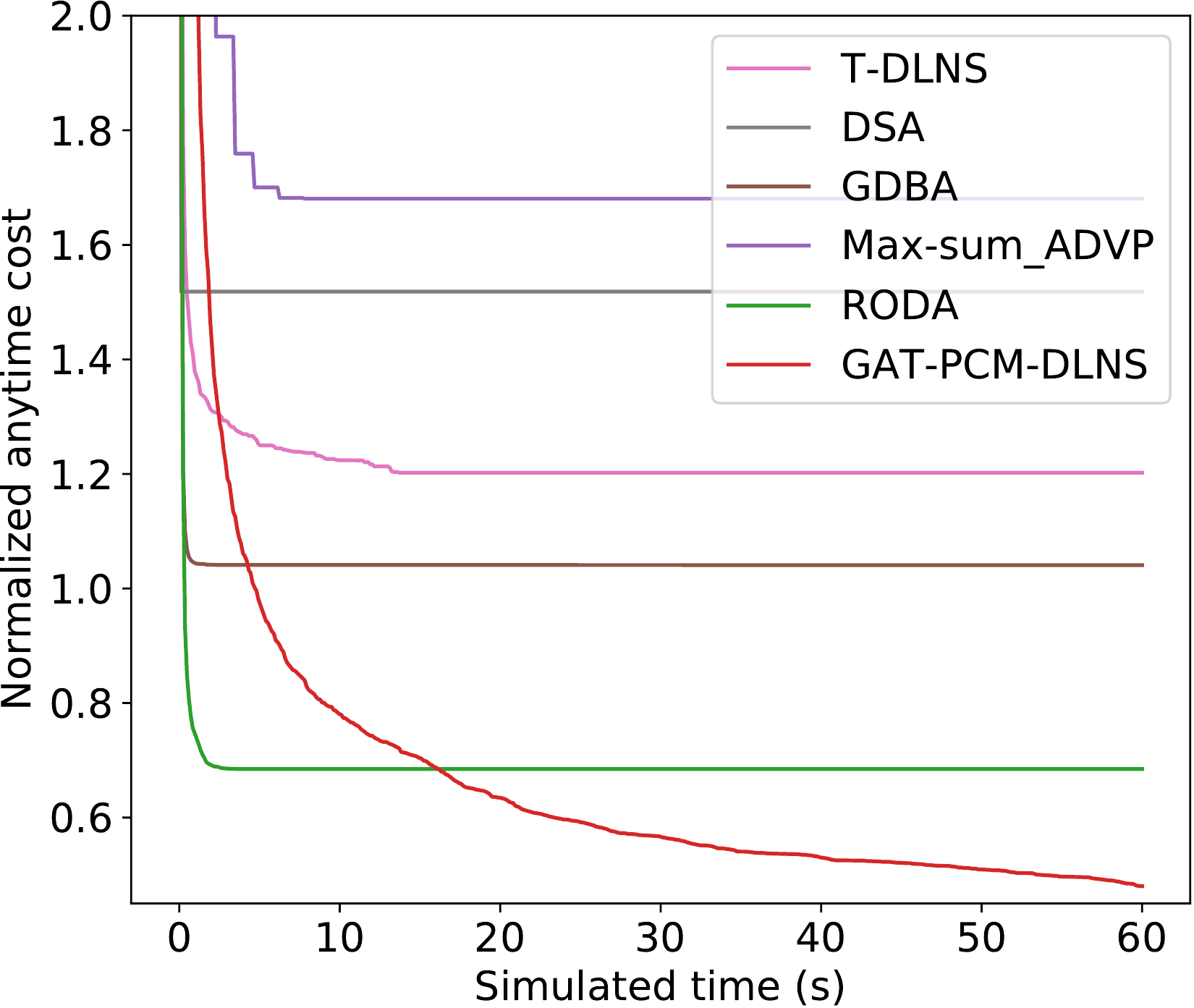}\\
      (b) weighted graph coloring
    \end{minipage}
    \caption{Convergence analysis}\label{fig:convergence}
\end{figure}

We display the average memory footprint per agent of GAT-PCM-DLNS in the first set of experiments in Fig \ref{fig:mem}, where ``Conf \#1'' to ``Conf \#5'' refer the growing complexity of each experiment. Specifically, the memory overhead of each agent consists of two parts, i.e., storing the pretrained model and local embeddings. The former consumes about 60KB memory, while the latter requires space proportional to the number of agents and the size of each constraint matrix (cf. Prop. 2). It can be concluded that our method has a modest memory requirement and scales up to large instances well in various settings. In particular, our method has a (nearly) constant memory footprint when solving grid network problems since each agent is constrained with at most four other agents regardless of the grid size.

To investigate how fast our GAT-PCM-DLNS finds a good solution, we conduct a convergence analysis which measures the performance in terms of simulated time \cite{sultanik2008dcopolis} on the problems with $|I|=1000,p_1=0.005$ and $d=3$ and present the results in Fig.~\ref{fig:convergence}. It can be seen that local search algorithms including DSA and GDBA quickly converge to a poor local optimum, while RODA finds a better solution in the first three seconds. T-DLNS slowly improves the solution but is strictly dominated by RODA. In contrast, our GAT-PCM-DLNS improves much steadily, outperforming all baselines after 18 seconds. 
\begin{figure}
    \centering
    \begin{minipage}{.495\linewidth}
    \centering
    \includegraphics[width=\linewidth]{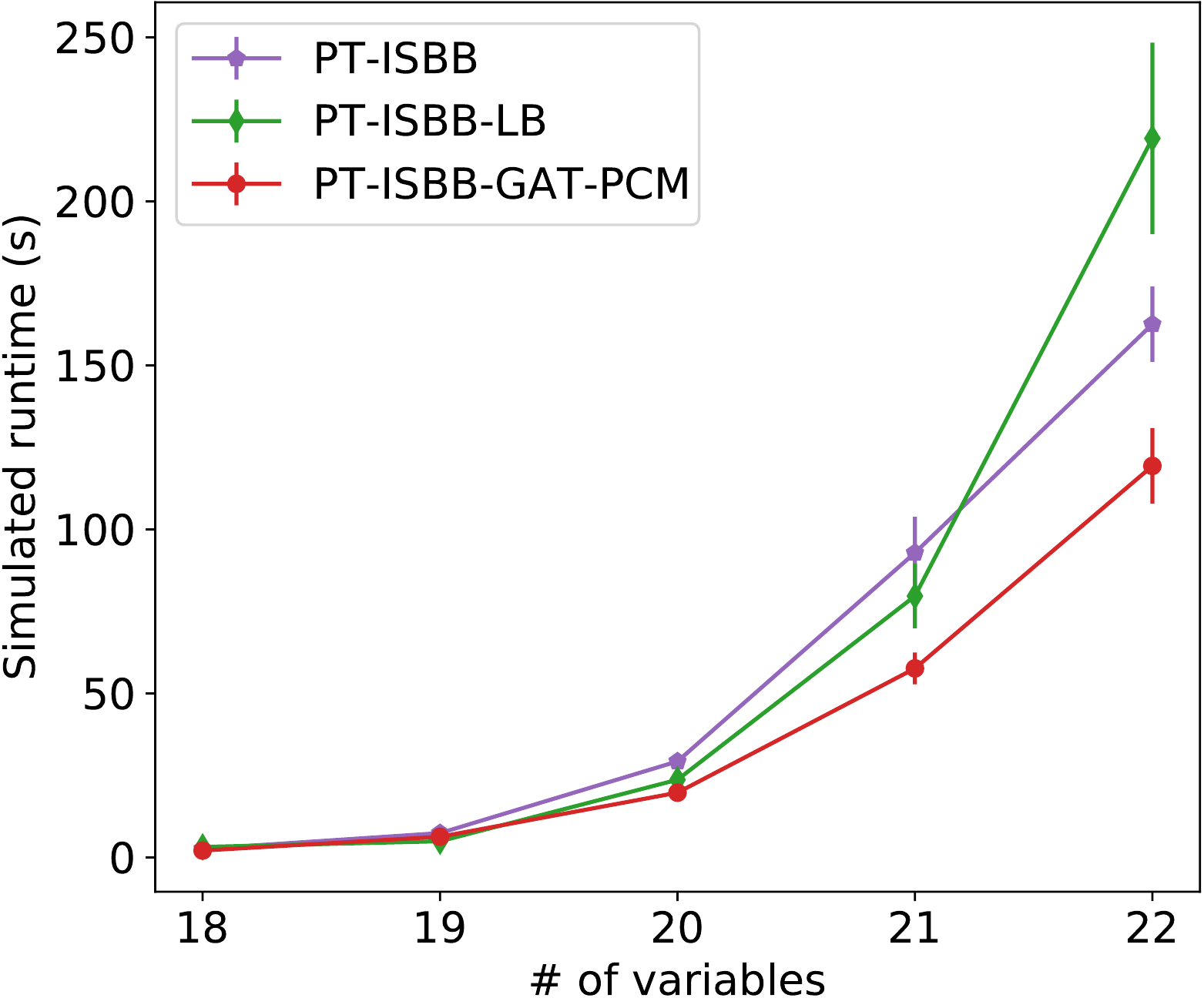}\\
      (a) random DCOPs
    \end{minipage}
    \begin{minipage}{.495\linewidth}
    \centering
    \includegraphics[width=\linewidth]{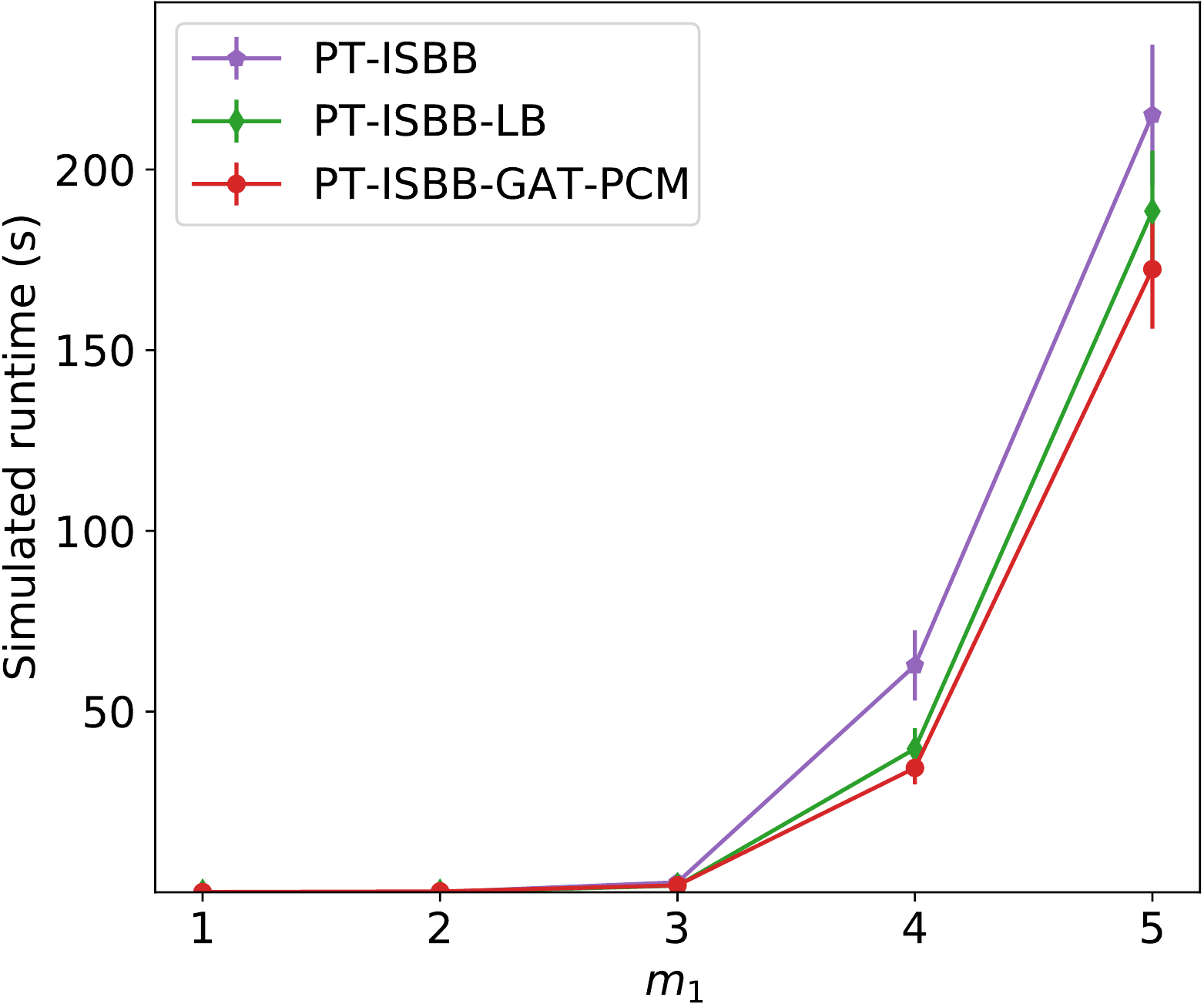}\\
      (b) scale-free networks
    \end{minipage}
    \caption{Runtime comparison on the problems with $d=5$}\label{fig:search}
\end{figure}

Finally, we demonstrate the merit of our GAT-PCM in accelerating backtracking search for random DCOPs with $p_1=0.25$ and scale-free networks with $|I|=18, m_0=5$ by conducting a case study on the symmetric version of PT-ISABB \cite{dengCCJL19} (referred as PT-ISBB) and present the results in Fig.~\ref{fig:search}. Specifically, we set the memory budget $k=2$ and compare the simulated runtime of PT-ISBB using three domain ordering generation techniques: alphabetically, lower bound lookup tables (PT-ISBB-LB), and our GAT-PCM (PT-ISBB-GAT-PCM). For PT-ISBB-GAT-PCM, we only perform domain ordering for the variables in the first three levels in a pseudo tree. It can be observed that the backtracking search with alphabetic domain ordering performs poorly and is dominated by the one with the lower bound induced domain ordering in the most cases. Notably, when solving the problems with 22 variables, PT-ISBB-LB exhibits the worst performance, because the lower bounds generated by approximated inference are not tight in complex problems, and hence the induced domain ordering may not prioritize promising assignments properly. On the other hand, our GAT-PCM powered backtracking search uses the predicted total cost of a subproblem as the criterion, resulting in a more efficient domain ordering and thus achieving the best results in solving complex problems.

\section{Conclusion}
In this paper, we present GAT-PCM, the first effective and general purpose deep pretrained model for DCOPs. We propose a novel directed acyclic graph representation schema for DCOPs and leverage the Graph Attention Networks (GATs) to embed our graph representations. Instead of generating heuristics for a particular algorithm, we train the model with optimally labelled data to predict the optimal cost of a target assignment given a partial assignment, such that GAT-PCM can be applied to boost the performance of a wide range of DCOP algorithms where evaluating the quality of an assignment is critical.
To enable efficient graph embedding in a distributed environment, we propose DES to perform decentralized model inference without disclosing local constraints, where each agent exchanges only the embedded vectors via localized  communication.
Finally, we develop several heuristics based on GAT-PCM to improve local search and backtracking search algorithms. Extensive empirical evaluations confirm the superiority of GAT-PCM based algorithms over the state-of-the-arts.

In future, we plan to extend GAT-PCM to deal with the problems with higher-arity constraints and hard constraints. Besides, since agents in DES exchange the embedded vectors instead of constraint costs, it is promising to extend our methods to an asymmetric setting \cite{grinshpoun2013asymmetric}.
\section{Acknowledgement}
This research was supported by the National Research Foundation, Singapore under its AI Singapore Programme (AISG Award No: AISG-RP-2019-0013), National Satellite of Excellence in Trustworthy Software Systems (Award No: NSOE-TSS2019-01), and NTU.
\bibliography{aaai22.bib}

\end{document}